\def\eqref#1{equation~\ref{#1}}
\def\1{\bm{1}}
\def\va{{\bm{a}}}
\def\vb{{\bm{b}}}
\def\vc{{\bm{c}}}
\def\vu{{\bm{u}}}
\def\vv{{\bm{v}}}
\def\vx{{\bm{x}}}
\def\vy{{\bm{y}}}
\def\mA{{\bm{A}}}
\DeclareMathAlphabet{\mathsfit}{\encodingdefault}{\sfdefault}{m}{sl}
\SetMathAlphabet{\mathsfit}{bold}{\encodingdefault}{\sfdefault}{bx}{n}
\newtheorem{definition}{Definition}[]
\newtheorem{theorem}{Theorem}[]
\newtheorem{lemma}{Lemma}[]
\newtheorem{corollary}{Corollary}[]
\newcommand{\bt}{\boldsymbol{\theta}}
\newcommand{\rnd}{\mathbb{R}^{n\times d}}
\newcommand{\cF}{\mathcal{F}}
\newcommand{\cV}{\mathcal{V}}
\title{The Responsibility Problem in Neural\\ Networks with Unordered Targets}
\author{Ben Hayes, Charalampos Saitis \& György Fazekas  \\
School of Electronic Engineering and Computer Science\\
Queen Mary University of London, UK\\
\texttt{\{b.j.hayes,c.saitis,george.fazekas\}@qmul.ac.uk} \\
}
\begin{document}

\maketitle

\begin{abstract}
We discuss the discontinuities that arise when mapping unordered objects to neural network outputs of fixed permutation, referred to as the \textit{responsibility problem}.
Prior work has proved the existence of the issue by identifying a single discontinuity.
Here, we show that discontinuities under such models are uncountably infinite, motivating further research into neural networks for unordered data.
\end{abstract}

\section{Introduction}

The \textit{responsibility problem} \citep{zhang_fspool_2020} describes an issue when training neural networks with unordered targets: the fixed permutation of output units requires that each assume a ``responsibility'' for some element.
Permutations of responsibility result in discontinuities wherein small changes in a permutation invariant metric demand large changes in the layer's actual output.
For feed-forward networks, the worst-case approximation of such discontinuous functions is arbitrarily poor for at least some subset of the input space \citep{kratsios_learning_2022}


Empirically, degraded performance has been observed on set prediction tasks \citep{zhang_deep_2020}, motivating research into architectures for set generation which circumvent these discontinuities \citep{zhang_deep_2020, kosiorek_conditional_2020, rezatofighi_deep_2018}.
The problem has briefly been stated formally in the literature \citep{zhang_fspool_2020}, but proven only by the existence of a single discontinuity.
Here we prove an alternative statement of the responsibility problem, categorising unordered-to-ordered mappings as operations that are isomorphic to sorting and operations that are not.

In the first case, the discontinuities that arise are ``classic'' permutations of responsibility -- that is, an element that was initially represented by one output unit becomes represented by another.
Using a result from \citet{hemasinha_ordered_2015}, we find that any mappings that preserve a consistent ordering between elements are isomorphic and thus all admit such discontinuities.
In the second case, discontinuities also appear wherever the map does \textit{not} implement a consistent ordering.
We demonstrate that, as a consequence, the set of discontinuities in both sorting and non-sorting maps is uncountably infinite.

This result is congruent with experimental evidence that performance suffers when the orderless structure of set data is not explicitly accounted for in model architectures. 
This has practical implications for any task that inherently involves set prediction, including object detection, point cloud generation, molecular graph generation, speech separation, and more.

\section{The Responsibility Problem}

Let $\Theta$ denote the set of sets $\bt$ of cardinality $|\bt|=n$ with elements in $\mathbb{R}^d$ where $d\geq2$:
$$
\bt \triangleq \left\{
\vx_k
\mid
\vx_k\in \mathbb{R}^d,
k \in \left\{1, \dots, n\right\}
\right\}
\in \Theta.
$$

We are interested in the set $\cF$ of maps $f\colon \Theta \to \rnd$, such that all $f \in \cF$ select a single assignment of each element in any $\bt \in \Theta$ to each row of the resulting matrix:
$$
\begin{array}{lcccr}
f(\bt) = 
\begin{bmatrix}
\vx_{\pi(1)} & \vx_{\pi(2)} & \dots & \vx_{\pi(n)} \\
\end{bmatrix}^T, &&
\forall f\in\cF, &
\forall \bt\in\Theta, &
\pi=\Pi_f(\bt)
\end{array}
,
$$

\noindent where $\Pi_f: \Theta \to S_n$ describes the permutation applied to the indices of $\bt$ and $S_n$ is the symmetric group of order $n$.
Such a map represents the assignment from an unordered object, such as a set, to the units of a neural network layer.
We thus wish to show that all $f\in\cF$ must be discontinuous, proceeding first with some definitions.

\begin{definition}\label{def:sorting}
    We call $\Pi_f\colon\Theta\to S_n$ a sorting operation under an order $\leq$ if for all $\bt\in\Theta$:
    $$
    \vx_{\pi(1)}\leq \vx_{\pi(2)} \leq \dots \leq \vx_{\pi(n)}, \quad \pi = \Pi_f(\bt).
    $$
\end{definition}

Let $\cV$ be an ordered vector space consisting of $\mathbb{R}^d$ equipped with a total order denoted $\ll$. That is, $\cV=(\mathbb{R}^d, \ll)$. 

\begin{definition}
    A total order $\ll$ over $\cV$ is called lexicographic if: 
    $$
    \begin{array}{llcr}
    \text{1.} & \vx = \vy \implies \vx \ll \vy & & \forall \vx, \vy \in \cV \\
    \text{2.} & \vx \ll \vy \land \vx \neq \vy \implies x_i < y_i & &  i=\min\left\{j | x_j \neq y_j\right\}, \forall \vx, \vy \in \cV
    \end{array}.
    $$
\end{definition}


\begin{definition}
    A basis set $\left\{ \vv_i \right\}_{i\in\{1,\dots,d\}}$ for $\cV$ is called non-Archimedian if $0 \ll \vv_d \ll \dots \ll \vv_1$ and $a\vv_{i+1} \ll \vv_i, \; \forall a \in \mathbb{R}$.
\end{definition}

Then we recall the following theorem from \citet{hemasinha_ordered_2015}.

\begin{theorem}
    Any total order $\ll$ over $\mathbb{R}^d$ where $d\geq2$ is isomorphic to the lexicographic ordering on a non-Archimedean basis.
\end{theorem}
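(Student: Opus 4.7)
The plan is to prove this classical structure theorem by induction on $d$, constructing the non-Archimedean basis one direction at a time, starting with the ``largest order of magnitude.'' The base case $d=1$ is immediate: any total order on $\mathbb{R}$ compatible with the vector space structure agrees with the standard order or its reverse, so the corresponding unit vector serves as the trivial non-Archimedean basis (the second non-Archimedean condition is vacuous).

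For the inductive step, I would introduce an Archimedean equivalence relation on the positive cone $P = \{x \in \cV : x \gg 0\}$ by declaring $x \sim y$ iff there exist positive integers $m, n$ with $m x \gg y$ and $n y \gg x$, and order the equivalence classes by domination ($[u] > [w]$ iff $a w \ll u$ for every $a > 0$). A linear-independence argument — taking representatives $u_1, \ldots, u_k$ of distinct classes with $[u_1] > \cdots > [u_k]$ and examining the leading nonzero coefficient in a hypothetical dependence $\sum c_i u_i = 0$ — shows that $k \leq d$, so a top class exists. I select $v_1$ from this top class.

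Next I define the subspace of infinitesimals $W = \{x \in \cV : -a v_1 \ll x \ll a v_1 \text{ for all } a > 0\}$ and verify that it is a subspace (closure under sums uses the splitting $a = a/2 + a/2$; closure under scalar multiplication is routine). The crux is to exhibit $W$ as the kernel of a surjective linear functional. For each $x \in \cV$, I set
$$
    a(x) = \sup\{b \in \mathbb{R} : b v_1 \ll x\}.
$$
This supremum is bounded above when $x$ is Archimedean-comparable to $v_1$ (otherwise $x \in W$, in which case the set equals $(-\infty, 0)$ and the sup is $0$), hence is a real number by Dedekind completeness. A direct computation with the sup shows $x - a(x) v_1 \in W$; linearity of $a$ then follows because $(x+y) - (a(x)+a(y))v_1 \in W$ forces $a(x+y) = a(x)+a(y)$, using the fact that no nonzero scalar multiple of $v_1$ lies in $W$. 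Since $\ker a = W$ and $a(b v_1) = b$, rank-nullity gives $\dim W = d-1$. Applying the inductive hypothesis to $W$ (or the trivial $d=1$ base case when $d=2$) produces $v_2, \ldots, v_d$; concatenating with $v_1$ and using the decomposition $x = a(x) v_1 + (x - a(x) v_1)$ yields both the non-Archimedean property and the lexicographic agreement of $\ll$ with the new coordinates.

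The main obstacle I anticipate is the $\mathbb{R}$-linearity of $a(\cdot)$: it is tempting but insufficient to verify only $\mathbb{Q}$-linearity, and genuine linearity rests on the interplay between Dedekind completeness of the scalar field and the total order on $\cV$. The closely related claim that $c v_1 \in W$ forces $c = 0$ is likewise where the real (as opposed to rational) structure intervenes, and it is the pivot on which the entire decomposition rests; once it is in hand, the remaining verifications assemble mechanically.
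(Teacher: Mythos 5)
Your proposal cannot be compared against a proof in the paper, because the paper does not prove this theorem at all: it is imported verbatim from the cited reference (Hemasinha and Weaver, 2015) and used as a black box to justify working with the lexicographic order on a non-Archimedean basis without loss of generality. What you have written is therefore a self-contained substitute for the citation, and it is the standard Hausner--Wendel-style structure argument: stratify the positive cone by Archimedean equivalence, show by the leading-coefficient independence argument that there are at most $d$ classes so a dominant class exists, pick $v_1$ there, exhibit the infinitesimals $W$ as the kernel of the functional $a(x)=\sup\{b : b v_1 \ll x\}$, and recurse on the $(d-1)$-dimensional ordered space $W$. The steps you outline are all correct and fillable; in particular the pivot you identify ($c v_1 \in W \implies c = 0$, via $a = |c|/2$) does the work, and your worry about $\mathbb{R}$- versus $\mathbb{Q}$-linearity largely evaporates once additivity and homogeneity of $a$ are both derived from ``the difference is a multiple of $v_1$ lying in $W$'' rather than from any limiting argument.

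One caveat worth making explicit, both for your proof and for the paper's statement: as literally written (``any total order over $\mathbb{R}^d$'') the theorem is false --- pulling back the standard order on $\mathbb{R}$ along a bijection $\mathbb{R}^d \to \mathbb{R}$ gives a total order that is order-isomorphic to $\mathbb{R}$ and hence not to lexicographic $\mathbb{R}^d$. The intended (and, in the cited source, explicit) hypothesis is that $\ll$ is compatible with the vector space operations, i.e.\ translation-invariant and preserved under multiplication by positive scalars. Your argument uses this hypothesis throughout (the positive cone, adding inequalities, $b' < b \implies b' v_1 \ll b v_1$), so you should state it up front rather than leave it implicit.
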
 

We can hence continue under the assumption that $\cV$ is defined on a non-Archimedian basis under the lexicographic order $\ll$, without loss of generality.
Then, we can prove the following lemma (proof given in Appendix~\ref{apdx:responsibility}).

\begin{lemma}\label{lemma:sort}
    If $\Pi_f$ is not a sorting operation under a total order, then $\Pi_f$ is not a sorting operation at all.
\end{lemma}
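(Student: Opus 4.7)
The plan is to prove the contrapositive: if $\Pi_f$ is a sorting operation under \emph{some} order $\leq$, then $\Pi_f$ is a sorting operation under a total order. The key observation is that applying $\Pi_f$ to an arbitrary $n$-set already forces the underlying relation to compare every pair of points in $\mathbb{R}^d$, so any ``order'' witnessing $\Pi_f$ as a sorting operation is essentially already total.

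Concretely, I would first show that $\leq$ must be connex on $\mathbb{R}^d$. Given any two distinct $\vx, \vy \in \mathbb{R}^d$, assemble a set $\bt \in \Theta$ containing both (possible because $\mathbb{R}^d$ is uncountable and $n \geq 2$, so $n-2$ further distinct points are always available). By Definition~\ref{def:sorting}, $\pi = \Pi_f(\bt)$ gives the chain $\vx_{\pi(1)} \leq \cdots \leq \vx_{\pi(n)}$, and transitivity of $\leq$ then yields $\vx \leq \vy$ or $\vy \leq \vx$. Since $\vx, \vy$ were arbitrary, $\leq$ compares every pair in $\mathbb{R}^d$.

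Combining this connexity with the reflexivity and transitivity inherent in any order makes $\leq$ at least a total preorder on $\mathbb{R}^d$. If $\leq$ is antisymmetric, it is already a total order and we are done; otherwise I would invoke an extension argument (the order-extension principle, or an explicit tiebreak on the $\leq$-equivalence classes, e.g.\ by the lexicographic order on their representatives) to obtain a refinement $\leq^\star$ that is a total order satisfying $\vx \leq \vy \Rightarrow \vx \leq^\star \vy$. Any chain under $\leq$ is then a chain under $\leq^\star$, so $\Pi_f$ remains a sorting operation under the total order $\leq^\star$. Contrapositively, if no total order witnesses $\Pi_f$ as a sorting operation, then no order does, which is the lemma.

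The main obstacle is disambiguating what ``order'' is intended in Definition~\ref{def:sorting}. If the authors mean a partial order, then Step~1 alone suffices, because connexity plus the standard partial-order axioms is already a total order. If they admit preorders, the antisymmetry gap must be bridged by the refinement argument above; the refinement is harmless because every $\bt \in \Theta$ has distinct elements, so no information about $\Pi_f$ is lost when tiebreaks are imposed. Either way the argument is short, because $\Theta$ already contains every $n$-subset of $\mathbb{R}^d$, leaving no room for the order to fail connexity on a pair that $\Pi_f$ actually has to sort.
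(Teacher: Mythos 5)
Your proposal is correct and follows essentially the same route as the paper's proof: both argue the contrapositive, observing that because every pair of points of $\mathbb{R}^d$ occurs together in some $\bt\in\Theta$, the witnessing order must compare every pair and is therefore total. Your version is in fact more careful than the paper's one-line argument (which simply asserts totality from the relation being ``defined'' everywhere); the only soft spot is your preorder branch, where a fixed tiebreaking refinement $\leq^\star$ need not agree with how $\Pi_f$ actually orders $\leq$-equivalent elements across \emph{different} sets $\bt$ --- but that branch is not needed on the reading of ``order'' the paper evidently intends.
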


One of the following statements is therefore true about a given $\Pi_f$: (1) $\Pi_f$ is isomorphic to a sorting operation under the lexicographic order, or (2) $\Pi_f$ is not a sorting operation.

This leads to the following lemma for discontinuity in the second case (proof in Appendix~\ref{apdx:main_theorems}).

\begin{lemma}\label{lemma:non_sorting}
For a map $f\in\cF$, if $\Pi_f$ is not a sorting operation, then $f$ is discontinuous.
\end{lemma}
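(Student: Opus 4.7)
The plan is to prove the contrapositive: assume $f$ is continuous and deduce that $\Pi_f$ must be a sorting operation under some total order, contradicting the hypothesis. Define a binary relation $\prec$ on $\mathbb{R}^d$ by declaring $\vx \prec \vy$ whenever there exists some $\bt \in \Theta$ containing both $\vx$ and $\vy$ such that $\vx$ occupies a strictly earlier row than $\vy$ in $f(\bt)$. If I can show $\prec$ is a well-defined total order, then by construction $\Pi_f$ sorts under $\prec$, yielding the desired contradiction.

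The crucial step is well-definedness: for any distinct $\vx, \vy \in \mathbb{R}^d$ and any two sets $\bt_1, \bt_2 \in \Theta$ both containing $\vx$ and $\vy$, the relative row-order of $\vx$ and $\vy$ in $f(\bt_1)$ and $f(\bt_2)$ must agree. Equip $\Theta$ with the Hausdorff metric. Since $d \geq 2$, I can construct a continuous path $\bt\colon [0,1] \to \Theta$ from $\bt_1$ to $\bt_2$ that holds $\vx$ and $\vy$ fixed and continuously interpolates the remaining $n-2$ elements while keeping all $n$ elements pairwise distinct at every $t$ (exploiting path-connectedness of the configuration space of $n$ distinct points in $\mathbb{R}^d$ when $d \geq 2$). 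By continuity of $f$, each row $t \mapsto [f(\bt(t))]_i$ is a continuous curve in $\mathbb{R}^d$ whose value at every $t$ lies in the finite set $\bt(t)$. Compactness of $[0,1]$ together with continuity of the $n$ generating curves yields a uniform lower bound $\delta > 0$ on pairwise distances between elements of $\bt(t)$, so any reassignment of the element represented by row $i$ would cause a jump of size at least $\delta$; this forces each row to track one specific element throughout the path. Applied to $\vx$ and $\vy$, they occupy the same rows at $t=0$ and $t=1$.

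Given well-definedness, the remaining order properties follow quickly. Irreflexivity and totality (for $\vx \neq \vy$, exactly one of $\vx \prec \vy$ or $\vy \prec \vx$ holds) are immediate from inspecting any single $\bt$ containing both. Transitivity follows by applying well-definedness to the three-element set $\{\vx, \vy, \vz\}$ and reading off the induced row ordering. Hence $\prec$ is a total order under which $\Pi_f$ sorts, contradicting the hypothesis of the lemma.

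The main obstacle will be formalizing the step that each row tracks a single element continuously along the path. The uniform separation from compactness handles this in principle, but care is needed to verify that the chosen notion of continuity of $f$ on $\Theta$ translates cleanly into continuity of individual rows as curves in $\mathbb{R}^d$; this should follow from the fact that convergence in the Hausdorff metric, combined with continuously varying distinct generators along a prescribed path, admits a continuous relabelling of the underlying points, so that locally constant integer-valued selectors are globally constant on the connected parameter interval.
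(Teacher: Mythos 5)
Your proof is essentially sound but takes a genuinely different route from the paper. The paper argues directly: from the failure of Definition~\ref{def:sorting} it extracts a pair $\va,\vb$ whose relative order in $f(\cdot)$ flips between two sets, bounds $\lVert f(\bt)-f(\bt^\prime)\rVert_F$ below by the resulting row mismatch, and then uses a boundary/limit argument between the region where the pair is ordered one way and the region where it is ordered the other to make $d_\Theta(\bt,\bt^\prime)$ arbitrarily small. You instead prove the contrapositive: assuming $f$ continuous, you build a candidate total order $\prec$ from the row positions and establish its well-definedness by a monodromy-style argument --- a path in the configuration space of $n$ distinct points (path-connected since $d\geq 2$), uniform separation of the generators by compactness, and the resulting local constancy of the integer-valued ``which element does row $i$ carry'' selector. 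Your route is arguably cleaner and makes explicit something the paper only gestures at (its sentence beginning ``The existence of such a pair is necessary because\ldots'' is precisely the well-definedness claim you prove); it also exposes exactly where $d\geq2$ enters. What it costs you is generality at small $n$: your transitivity step needs a single $\bt\in\Theta$ containing $\vx,\vy,\vz$, hence $n\geq 3$ (for $n=2$ the relation $\prec$ is only a tournament and could a priori be cyclic; one would have to close that case separately, e.g.\ by noting that the double cover of the two-point configuration space admits no continuous section). Two further points to tidy up when formalizing: the paper topologizes $\Theta$ via the sum-pooled metric $d_\Theta$ rather than the Hausdorff metric, so you should check (it is true, since $\phi$ is continuous and your path moves the generators continuously) that your path $t\mapsto\bt(t)$ is continuous into $(\Theta,d_\Theta)$; and your $\prec$ is strict, whereas Definition~\ref{def:sorting} is stated with $\leq$, so you should pass to its reflexive closure before invoking Lemma~\ref{lemma:sort}.
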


Therefore, we focus now only on the case where $\Pi_f$ is a sorting operation under $\ll$, giving the following lemma (proven in Appendix~\ref{apdx:main_theorems}).

\begin{lemma}\label{lemma:sorting}
For a map $f\in\cF$, if $\Pi_f$ is a sorting operation, then $f$ is discontinuous.
\end{lemma}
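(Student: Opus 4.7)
The plan is to exhibit a single $\bt \in \Theta$ at which $f$ fails to be continuous, by perturbing one element of $\bt$ by a vanishingly small amount in a direction that forces the sort to swap two rows of the output matrix. Appealing to Lemma~\ref{lemma:sort} and the Hemasinha theorem quoted above, I may assume without loss of generality that $\Pi_f$ sorts under the lexicographic order $\ll$ on a non-Archimedean basis $\{\vv_1,\dots,\vv_d\}$ of $\mathbb{R}^d$, and implicitly that $n \geq 2$ (the $n=1$ case is vacuous).

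For the construction I would pick a scalar $a$ and scalars $b_1 < b_2$, set $\vx_1 = a\vv_1 + b_1\vv_2$ and $\vx_2 = a\vv_1 + b_2\vv_2$, and augment to a set $\bt = \{\vx_1,\vx_2,\vx_3,\dots,\vx_n\} \in \Theta$ whose remaining elements have $\vv_1$-coefficients strictly different from $a$ (possible since $n$ is finite). Under lex order $\vx_1 \ll \vx_2$, so if $\pi = \Pi_f(\bt)$ places $\vx_1$ at row $i$ and $\vx_2$ at row $j$, then $i<j$. I would then introduce the perturbed family $\bt_\eps = (\bt \setminus \{\vx_1\}) \cup \{\vx_1 + \eps\vv_1\}$ for $\eps > 0$; its Hausdorff distance from $\bt$ is at most $\eps\|\vv_1\|$, which tends to $0$.

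The key step is the observation that in $\bt_\eps$ the $\vv_1$-coefficient of $\vx_1 + \eps\vv_1$ is $a+\eps > a$, so lex order now yields $\vx_2 \ll \vx_1 + \eps\vv_1$. For $\eps$ smaller than the minimum gap between $a$ and the $\vv_1$-coefficients of the other elements, this flip is the only change in the sort: $\Pi_f(\bt_\eps)$ agrees with $\pi$ except that rows $i$ and $j$ are swapped. From this I can bound
\[
\|f(\bt_\eps) - f(\bt)\|_F^2 \;\geq\; \|\vx_2 - \vx_1\|^2 + \|\vx_1 + \eps\vv_1 - \vx_2\|^2,
\]
whose right-hand side tends to $2\|\vx_2 - \vx_1\|^2 > 0$ as $\eps \to 0^+$. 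Hence $f(\bt_\eps) \not\to f(\bt)$ even though $\bt_\eps \to \bt$, proving $f$ is discontinuous at $\bt$.

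The main obstacle I expect is bookkeeping around the topology on $\Theta$: any sensible metric (Hausdorff, a Wasserstein distance between empirical measures, or the quotient metric on $(\mathbb{R}^d)^n / S_n$) works because the perturbation alters only one element by $\eps\vv_1$, but this choice should be stated up front. A secondary care-point is ensuring that the perturbation produces exactly one swap; this is controlled by the finite gap argument above and only uses the lexicographic structure, while the non-Archimedean property enters the proof only through the WLOG reduction step.
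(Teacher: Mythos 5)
Your proposal is correct and is essentially the paper's own argument: both constructions take two elements that tie on the dominant (first) lexicographic coordinate and differ only in a subordinate one, then perturb the dominant coordinate by an arbitrarily small amount to force an adjacent swap in the sorted output, yielding a Frobenius gap bounded below independently of the perturbation size (you nudge the smaller element up by $\eps\vv_1$; the paper nudges the larger one down by $\tau\vv_1$). Your explicit bookkeeping that the other elements have distinct $\vv_1$-coefficients, so that exactly one transposition occurs, and your remark that any reasonable metric on $\Theta$ suffices (the paper fixes a specific sum-pooled metric $d_\Theta$ and invokes continuity of $\phi$) are minor refinements rather than a different route.
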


By exhaustion, the proof of the responsibility problem is then trivial.

\begin{theorem}\label{theorem:resp}
    All $f\in\cF$ are discontinuous.
\end{theorem}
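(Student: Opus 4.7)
The plan is to finish Theorem~\ref{theorem:resp} by a case analysis on $\Pi_f$ that is made exhaustive by the cited theorem of \citet{hemasinha_ordered_2015} together with Lemma~\ref{lemma:sort}, and then to close each case by one of the two preceding lemmas.

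First, I would fix an arbitrary $f \in \cF$ and consider its associated permutation map $\Pi_f$. Appealing to the cited theorem, every total order on $\mathbb{R}^d$ with $d \geq 2$ is isomorphic to the lexicographic order on some non-Archimedean basis, so there is no loss of generality in only considering the sorting case under $\ll$. Lemma~\ref{lemma:sort} then rules out any intermediate possibility: either $\Pi_f$ is a sorting operation under some total order (equivalently, under $\ll$ after an isomorphic re-coordinatisation), or it fails to be a sorting operation at all.

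Second, I would dispatch the two cases. If $\Pi_f$ is not a sorting operation, Lemma~\ref{lemma:non_sorting} gives that $f$ is discontinuous. Otherwise $\Pi_f$ is a sorting operation, and Lemma~\ref{lemma:sorting} yields the same conclusion. Since the two cases are exhaustive and mutually exclusive by the preceding paragraph, $f$ must be discontinuous; since $f$ was arbitrary, the theorem follows.

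The theorem itself therefore presents no obstacle, since the real content lives in the two discontinuity lemmas. The anticipated difficulty in Lemma~\ref{lemma:sorting} will be constructing a continuous path $\bt_t \in \Theta$ along which two elements cross in the lexicographic order at some parameter $t_0$, forcing $\Pi_f$ to swap two rows of $f(\bt_t)$ while the underlying set changes by an arbitrarily small amount, and making this argument robust to equalities in the leading coordinate by exploiting the non-Archimedean structure. The anticipated difficulty in Lemma~\ref{lemma:non_sorting} is to show that any failure to sort globally already entails a local discontinuity: given a set on which $\Pi_f$ violates a would-be consistent ordering, one must exhibit an arbitrarily small perturbation across which the assignment jumps, so that no continuous extension exists.
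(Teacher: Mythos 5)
Your proposal matches the paper's own proof of Theorem~\ref{theorem:resp}: both reduce the theorem to an exhaustive two-case dichotomy (sorting vs.\ not sorting, made exhaustive via Lemma~\ref{lemma:sort} and the order-isomorphism theorem) and close each case with Lemma~\ref{lemma:sorting} or Lemma~\ref{lemma:non_sorting} respectively. Your additional remarks correctly identify that the substantive content lies in those two lemmas rather than in the theorem itself.
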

\begin{proof}
    By Lemmas~\ref{lemma:non_sorting} and~\ref{lemma:sorting} we have:
    $$
    (f \; \text{is a sorting operation})
    \lor
    (f \; \text{is not a sorting operation})
    \implies 
    f \; \text{is discontinuous},
    \quad \forall f\in\cF
    $$
    The antecedent is clearly a tautology, so all $f\in\cF$ are discontinuous, and we have proven the existence of the responsibility problem.
\end{proof}

Moreover, having proved Theorem~\ref{theorem:resp} in terms of sorting and non-sorting maps yields the following corollary (proof in Appendix~\ref{apdx:main_theorems}).

\begin{corollary}\label{theorem:uncountable}
For a map $f \in \cF$, the set of $\theta \in \Theta$ for which $f$ is discontinuous is uncountably infinite.
\end{corollary}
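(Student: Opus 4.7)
The plan is to handle separately the two cases of the dichotomy used in Theorem~\ref{theorem:resp}: either $\Pi_f$ is a sorting operation (and, by the cited result of \citet{hemasinha_ordered_2015}, isomorphic to the lexicographic sort on a non-Archimedian basis $\{\vv_i\}$), or $\Pi_f$ is not a sorting operation at all. In each case I would exhibit an uncountable family of $\bt\in\Theta$ at which $f$ is discontinuous; the corollary then follows by exhaustion, exactly as in the proof of Theorem~\ref{theorem:resp}.

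For the sorting case, I would parametrise discontinuities by a real scalar that tunes a \emph{sort boundary}---a configuration whose lex-order between two specified elements is about to flip. Fix $n-2$ elements $\vx_3,\ldots,\vx_n\in\R^d$ chosen generically so that their pairwise orderings under $\ll$ are strict and collisions with the elements constructed below are avoided (at most finitely many values of $c$ need be excluded). For each admissible $c\in\R$, set
$$\bt_c = \left\{\, c\vv_1+\vv_2,\; c\vv_1-\vv_2,\; \vx_3,\ldots,\vx_n\,\right\}\in\Theta.$$
The two distinguished elements share their $\vv_1$-coefficient, so perturbing the first by $+\epsilon\vv_1$ versus $-\epsilon\vv_1$ yields opposite lex-comparisons of that pair, while the non-Archimedian property keeps every other pairwise order fixed for $\epsilon$ sufficiently small. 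Hence $\Pi_f$ must return different permutations on the two sides of $\epsilon=0$, swapping two rows of $f(\bt_c)$ and forcing $f$ to be discontinuous at $\bt_c$. Since $c\mapsto\bt_c$ is injective, this already produces an uncountable discontinuity set.

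For the non-sorting case, I would take a point $\bt^*\in\Theta$ supplied by the proof of Lemma~\ref{lemma:non_sorting}, at which $\Pi_f(\bt^*)$ places some adjacent pair at rows $i,i+1$ in the reversed lex-order $\vx_{\pi(i+1)}\ll\vx_{\pi(i)}$, and argue that this reversed-pair condition is an open constraint on $\Theta\subset\R^{nd}$. Specifically, strict lex-inequalities on a non-Archimedian basis depend continuously on the coordinates, so every $\bt'$ in a small Euclidean neighbourhood of $\bt^*$ still has its rows $i,i+1$ reversed under $\Pi_f$. Applying the argument of Lemma~\ref{lemma:non_sorting} pointwise then shows $f$ is discontinuous throughout that neighbourhood, which is an uncountable open subset of $\Theta$.

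The hard part will be making the second case rigorous: one must ensure that the single witness supplied by Lemma~\ref{lemma:non_sorting} spreads to an uncountable set rather than being an isolated defect. This hinges on the openness of the reversed-pair condition in a non-Archimedian basis, together with the fact that the discontinuity argument of Lemma~\ref{lemma:non_sorting} applies uniformly to every point whose permutation violates the sort at a fixed pair of rows. Once this is in place, assembling the two cases via the same tautology used in Theorem~\ref{theorem:resp} is immediate.
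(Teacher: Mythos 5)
Your sorting-case argument is sound and is essentially the paper's: the paper simply observes that the construction in Lemma~\ref{lemma:sorting} goes through for an arbitrary choice of $\va\in\mathbb{R}^d$, so the discontinuity set has cardinality at least $|\mathbb{R}^d|$; your one-parameter family $\bt_c$ is a concrete instance of the same idea (with the minor bookkeeping that $c\mapsto\bt_c$ being at most finitely-to-one already suffices).

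The non-sorting case, however, contains a genuine gap, and it is not merely a matter of polish. First, the claim that ``strict lex-inequalities on a non-Archimedian basis depend continuously on the coordinates'' is false: the lexicographic order is precisely the order whose strict comparisons are \emph{not} stable under perturbation. If $\va\ll\vb$ because the two agree in their $\vv_1$-coefficient and differ in a lower one, then subtracting $\tau\vv_1$ from $\vb$ for arbitrarily small $\tau>0$ reverses the comparison --- this instability is exactly what the proof of Lemma~\ref{lemma:sorting} exploits, so it cannot simultaneously serve as an openness argument. Second, and more fundamentally, the reversed-pair condition is a condition on $\Pi_f$, which is an arbitrary function $\Theta\to S_n$ with no continuity or regularity assumptions, so there is no reason the set of configurations exhibiting a reversed pair should be open; $\Pi_f$ could reverse the pair at $\bt^*$ alone. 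And even granting openness, a map that orders a pair ``wrongly'' but \emph{consistently} throughout an open set incurs no discontinuity there --- for instance, sorting in decreasing lexicographic order reverses every pair everywhere, yet it is itself a sorting operation under the opposite total order, with discontinuities only on sort boundaries. The argument of Lemma~\ref{lemma:non_sorting} locates a discontinuity on the boundary between the correctly-ordered region $A$ and the reversed region $B$, not throughout $B$, so ``applying it pointwise'' in the interior of $B$ yields nothing. The paper sidesteps this entirely: in the non-sorting case it reruns the Lemma~\ref{lemma:sorting} construction for every $\va\in\mathbb{R}^d$ and observes that the resulting pair $\bt,\bt^\prime$ either both lie in the correctly-sorted part of $\Theta$ (so Lemma~\ref{lemma:sorting} applies) or at least one lies in the unsorted part (so Lemma~\ref{lemma:non_sorting} applies), producing a discontinuity for each choice of $\va$ in either case. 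You would need to replace your neighbourhood-spreading step with an argument of that kind.
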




\section{Conclusion}

We have provided an alternate statement of the responsibility problem in neural networks, and have for the first time proven its existence for an uncountably infinite class of discontinuity. 
The existence and scope of this problem motivates further research into neural network architectures for set generation and into the behaviour of common optimization algorithms in the presence of such discontinuities.


\subsubsection*{URM Statement}
The authors acknowledge that at least one key author of this work meets the URM criteria of ICLR 2023 Tiny Papers Track.

\bibliography{references}
\bibliographystyle{iclr2023_conference_tinypaper}

\appendix
\newtheorem{manuallemmainner}{Lemma}
\newenvironment{manuallemma}[1]{%
  \renewcommand\themanuallemmainner{#1}%
  \manuallemmainner
}{\endmanuallemmainner}
\newtheorem{manualtheoreminner}{Theorem}
\newenvironment{manualtheorem}[1]{%
  \renewcommand\themanualtheoreminner{#1}%
  \manualtheoreminner
}{\endmanualtheoreminner}
\newtheorem{manualcorollaryinner}{Corollary}
\newenvironment{manualcorollary}[1]{%
  \renewcommand\themanualcorollaryinner{#1}%
  \manualcorollaryinner
}{\endmanualcorollaryinner}

\section{Proofs}\label{apdx:responsibility}

\begin{manuallemma}{\ref{lemma:sort}}
    If $\Pi_f$ is not a sorting operation under a total order, then $\Pi_f$ is not a sorting operation at all.
\end{manuallemma}
\begin{proof}
    Let us say that $\Pi_f$ is a sorting operation under an order denoted $\lhd$. Then $\lhd$ is necessarily a total order because,

    $$
    \begin{array}{lcr}
    f(\bt) = \begin{bmatrix}
        \vx_{\pi(1)} & \vx_{\pi(2)} & \dots & \vx_{\pi(n)}
        \end{bmatrix}
        &\iff&   \vx_{\pi(1)}\lhd \vx_{\pi(2)} \lhd \dots \lhd \vx_{\pi(n)}, \\
        & & \pi=\Pi_f(\bt), \; \forall \bt \in \Theta.
    \end{array}
    $$

    That is, the binary operation $\lhd$ must be defined for all $\vx\in\mathbb{R}^d$ and is therefore a total order by the totality axiom.
\end{proof}

\subsection{Main Proofs}\label{apdx:main_theorems}

To proceed with proving Lemmas~\ref{lemma:non_sorting} and~\ref{lemma:sorting}, we define $d_\Theta\colon\Theta\times\Theta\to\mathbb{R}$ as a metric of the form:
$$
d_\Theta(\bt,\bt^\prime) = \left\lVert\sum_{\vx\in\bt}\phi(\vx) - \sum_{\vx^\prime \in \bt^\prime}\phi(\vx^\prime)\right\rVert_p,
$$
where $\phi\colon\mathbb{R}^d\to\mathbb{R}^k$ is a continuous map from elements in $\bt$ to an arbitrary representation. $(\Theta, d_\Theta)$ is then a metric space.

\begin{manuallemma}{\ref{lemma:non_sorting}}
For a map $f\in\cF$, if $\Pi_f$ is not a sorting operation, then $f$ is discontinuous.
\end{manuallemma}
\begin{proof}
We wish to show that under the Frobenius norm $\lVert\cdot\rVert_F$ over $\rnd$, there exists an $\epsilon>0$ such that for all $\delta > 0$ there exist two sets $\bt, \bt^\prime \in \Theta$ such that $d_\Theta(\bt, \bt^\prime)<\delta$ and $\left\lVert f(\bt) - f(\bt^\prime)\right\rVert_F \geq \epsilon$.

Assume $\Pi_f$ is not a sorting operation for some $f\in\cF$, then there must exist at least one pair of elements in $\cV$ for which $\Pi_f$ violates Definition~\ref{def:sorting}.
It follows, then, that there exists $\va, \vb \in \cV$ with $\va \ll \vb$ and $\va \neq \vb$ such that there exist $\bt, \bt^\prime\in\Theta$ defined: 
$$
\begin{array}{ll}
    \bt &= \left\{
    \va, \vb, \vu_1, \vu_2, \dots, \vu_{n-2}
    \right\}  \\
    \bt^\prime &= \left\{
    \va, \vb, \vu_1^\prime, \vu_2^\prime, \dots, \vu_{n-2}^\prime
    \right\}  \\
\end{array},
$$
where the permutations produced by $f$ assume the following form:
$$
\begin{array}{ll}
f(\bt)&=
\begin{bmatrix}
\dots & \va & \dots & \vb & \dots    
\end{bmatrix}^T \\
f(\bt^\prime)&=
\begin{bmatrix}
\dots & \vb & \dots & \va & \dots    
\end{bmatrix}^T,\\
\end{array}
$$
where the indices of the rows of $f(\bt)$ corresponding to $\va, \vb$ do not necessarily equal the indices of the rows of $f(\bt^\prime)$ corresponding to $\va, \vb$.
The existence of such a pair is necessary because if $\ll$ is inverted for for all $\bt\in\Theta$ containing $\va, \vb$, then the result is isomorphic to $\ll$ up to renaming of elements.
By this necessary inversion of ordering, we can form the following inequality because only one of $\va$ or $\vb$ can possibly be assigned to the same column in both matrices:
$$
\left\lVert
f(\bt) - f(\bt^\prime)
\right\rVert_F^2
\geq
\min\left\{
\min_{i,j} \left\lVert \va - \vu_i \right\rVert_2^2 +
\left\lVert \va - \vu_j^\prime \right\rVert_2^2,
\min_{i,j} \left\lVert \vb - \vu_i \right\rVert_2^2 +
\left\lVert \vb - \vu_j^\prime \right\rVert_2^2
\right\}.
$$
Let this quantity be $\epsilon^2$.
Let $A\subset\Theta$ be the set of all $\bt$ containing $\va, \vb$ where Definition~\ref{def:sorting} holds for $\va$ and $\vb$, and let $B\subset\Theta$ be the set of all $\bt^\prime$ containing $\va, \vb$ where Definition~\ref{def:sorting} does not hold.
Let $C = A\cup B$. 
Then $C$ is a metric space under $d_\Theta$, with $A, B \subset C$ and $A=B^c$.

There must therefore exist a sequence $\left(\bt_m\right)_{m\in\mathbb{N}}$ in $A$ and an element $\bt^\prime\in \partial B$ where $\lim_{m\to\infty}\bt_k  =\bt^\prime$.
Hence we can find $\bt\in A, \bt^\prime\in B$ such that $d_\Theta(\bt,\bt^\prime) < \delta$ for all $\delta>0$.

Having defined $\epsilon$, we have thus satisfied the discontinuity criterion.
\end{proof}

To prove Lemma~\ref{lemma:sorting}, we require a simple intermediate result.

\begin{lemma}
    All maps $f\in\cF$ are injective.
\end{lemma}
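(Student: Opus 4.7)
The plan is to observe that, by the very definition of $\cF$, the matrix $f(\bt)$ encodes nothing more than a choice of ordering of the elements of $\bt$, so $\bt$ can be recovered from $f(\bt)$ simply by reading off its rows as a set. Injectivity then follows in one line.

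More concretely, the first step is to unpack the definition: for any $f\in\cF$ and any $\bt=\{\vx_1,\dots,\vx_n\}\in\Theta$, we have
$$
f(\bt)=\begin{bmatrix}\vx_{\pi(1)} & \vx_{\pi(2)} & \dots & \vx_{\pi(n)}\end{bmatrix}^T, \qquad \pi=\Pi_f(\bt)\in S_n.
$$
Since $\pi$ is a bijection on $\{1,\dots,n\}$, the collection of rows of $f(\bt)$ is $\{\vx_{\pi(1)},\dots,\vx_{\pi(n)}\}=\{\vx_1,\dots,\vx_n\}=\bt$; the cardinality condition $|\bt|=n$ guarantees there are no multiplicity issues, so this set equality is genuine.

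The second step is to conclude: if $f(\bt)=f(\bt')$ for some $\bt,\bt'\in\Theta$, then the two matrices have identical row sets, and by the previous paragraph these row sets are $\bt$ and $\bt'$ respectively. Hence $\bt=\bt'$, and $f$ is injective.

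I do not expect any real obstacle here, since the statement is essentially a restatement of the defining property of $\cF$, namely that each $f\in\cF$ merely selects an ordering of the elements of its input set. The only point worth flagging explicitly is the appeal to $|\bt|=n$, which rules out degenerate coincidences among the $\vx_k$ and makes the identification of the row multiset with the underlying set unambiguous.
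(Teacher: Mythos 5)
Your proposal is correct and uses the same key observation as the paper's proof, namely that the set of rows of $f(\bt)$ recovers $\bt$ itself, so equal outputs force equal inputs; the only cosmetic difference is that you argue directly while the paper phrases it as a contradiction.
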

\begin{proof}
    Assume there is a map $g\in\cF$ that is not injective.
    Then we have:
    $$
    \begin{array}{lcr}
    \exists \bt, \bt^\prime \in \Theta :&&
    f(\bt) = f(\bt^\prime) \land \bt \neq \bt^\prime.
    \end{array}
    $$
    However, by definition the set of rows of the matrix $\left\{f(\bt)_i \mid i \in \{1, \dots, n\}\right\}$ is equivalent to the set $\bt$, giving:
     $$
     \bt = \left\{f(\bt)_i \mid i \in \{1, \dots, n\}\right\}
     =\left\{f(\bt^\prime)_i \mid i \in \{1, \dots, n\}\right\}
     =\bt^\prime \neq \bt,
     $$
     which is a contradiction.
\end{proof}

\begin{manuallemma}{\ref{lemma:sorting}}
For a map $f\in\cF$, if $\Pi_f$ is a sorting operation, then $f$ is discontinuous.
\end{manuallemma}

\begin{proof}
We again wish to show that under the Frobenius norm $\lVert\cdot\rVert_F$ over $\rnd$, there exists an $\epsilon>0$ such that for all $\delta > 0$ there exist two sets $\bt, \bt^\prime \in \Theta$ such that $d_\Theta(\bt, \bt^\prime)<\delta$ and $\left\lVert f(\bt) - f(\bt^\prime)\right\rVert_F \geq \epsilon$.

Consider two vectors $\va, \vb \in \cV$ such that $\vb=\va + \epsilon \vv_j$ for $\epsilon > 0$ and for some $j > 1$, where $\left\{ \vv_i \right\}_{i\in\{1,\dots,d\}}$ is the non-Archimedean basis of $\cV$. We then have that $\va \ll \vb$ by definition of the lexicographic order.
It thus follows that the matrix $\mA$ is in the image of $\Theta$ under $f$:
$$
\mA =
\begin{bmatrix}
    \dots & \va & \vb & \dots
\end{bmatrix}^T \in f[\Theta].
$$

By injectivity of $f$ there is a unique $\bt\in\Theta$ that maps to $\mA$:
$$
\mA=f(\bt)\implies\bt=\left\{\dots, \va, \vb,\dots\right\}.
$$

Let $B_{\delta^\prime}(\vb)$ be the open ball of radius $\delta^\prime > 0$ and centre $\vb$.
Let $N_{\va} \triangleq \left\{ \vv \mid \vv\in\cV, \vv\ll\va, \vv \neq \va  \right\}$ be the set of all elements in $\cV$ that are strictly less than $\va$ under $\ll$.
We wish to find an element $\vc\in B_{\delta^\prime}(\vb)\cap N_{\va}$.
In other words, we seek, an element in the $\delta^\prime$-ball around $\vb$ that is strictly less than $\va$ under the lexicographic order $\ll$.
We can find such an element by taking $\vc = \vb - \tau\vv_1$ for $0<\tau<\delta^\prime$.

Then another matrix $\mA^\prime = f(\bt^\prime)$ where $\bt^\prime=\left\{\dots, \va, \vc,\dots\right\}$, assuming all unspecified elements equal to those in $\bt$, must take the form:
$$
\mA^\prime=\begin{bmatrix}
    \dots & \vc & \va & \dots
\end{bmatrix}^T = f(\bt^\prime). 
$$

Then we have:
$$
\left\lVert f(\bt) - f(\bt^\prime) \right\rVert_F
= \left\lVert \mA - \mA^\prime \right\rVert_F
= \sqrt{\lVert \va - \vc \rVert_2^2 + \lVert \vb - \va\rVert_2^2 + \dots}
\geq \lVert\vb - \va\rVert_2 = \epsilon.
$$

By definition of $d_\Theta$ we have:
$$
d_\Theta(\bt, \bt^\prime)=\left\lVert \phi(\vb)-\phi(\vc)\right\rVert_p.
$$

By continuity of $\phi$ we can always select a $\delta^\prime$ such that $d_\Theta(\bt, \bt^\prime)<\delta$ for arbitrarily small $\delta$.
Hence, we have found a class of $\bt\in\Theta$ at which $f$ is discontinuous.
\end{proof}

\begin{manualcorollary}{\ref{theorem:uncountable}}
For a map $f \in \cF$, the set of $\theta \in \Theta$ for which $f$ is discontinuous is uncountably infinite.
\end{manualcorollary}
\begin{proof}
    First assume that $\Pi_f$ is a sorting operation under $\ll$.
    In the proof of Lemma~\ref{lemma:sorting}, we selected a value for $\va \in \mathbb{R}^d$ arbitrarily.
    It follows that for any choice of $\va\in\mathbb{R}^d$ we can find $\bt$ at which $f$ has a discontinuity.
    Let $D_f$ denote the set of $\bt$ at which $f$ is discontinuous, then we have $|D_f|\geq|\mathbb{R}^d| > \aleph_0$, so $D_f$ must be uncountable.

    \newcommand{\Tu}{\Theta_\text{u}}
    \newcommand{\Ts}{\Theta_\text{s}}
    In the case where $\Pi_f$ is not a sorting operation under $\ll$, we can partition $\Theta$ into two subsets: $\Ts$ which contains all $\bt$ that $\Pi_f$ sorts according to $\ll$, and $\Tu$ which contains all $\bt$ that $\Pi_f$ does not sort according to $\ll$.

    We then follow the construction in the proof of Lemma~\ref{lemma:sorting} to find $\bt, \bt^\prime$. Then either (i) both $\bt$ and $\bt^\prime$ are in $\Ts$, in which case we have a discontinuity at $\bt$ by Lemma~\ref{lemma:sorting} or (ii) at least one of $\bt$ or $\bt^\prime$ is in $\Tu$, in which case we have a discontinuity by Lemma~\ref{lemma:non_sorting}.
    One of these cases must hold for all choices of $\va \in \mathbb{R}^d$, so again we have $|D_f|\geq|\mathbb{R}^d| > \aleph_0$, so $D_f$ must be uncountable for all $f\in\cF$. 
\end{proof}



\end{document}